\documentclass[10pt,journal,twoside]{IEEEtran}

\usepackage[T1]{fontenc}
\usepackage[utf8]{inputenc}
\usepackage[margin=1in]{geometry}
\usepackage{graphicx}
\usepackage{booktabs}
\usepackage{multirow}
\usepackage{algorithm}
\usepackage{cite}
\usepackage{hyperref}
\usepackage{microtype}
\usepackage{caption}
\usepackage{float}
\usepackage{siunitx}
\usepackage{amsmath, amssymb, amsthm}
\usepackage{bm}
\usepackage{algpseudocode}

\newtheorem{lemma}{Lemma}

\hypersetup{
  colorlinks=true,
  linkcolor=blue,
  citecolor=blue,
  urlcolor=blue
}

\title{Class--Confidence Aware Reweighting for Long-Tailed Learning}

\author{Brainard~Philemon~Jagati%
       \thanks{Corresponding author  E-mail: jtembhurne@iiitn.ac.in}, 
       \and Jitendra~Tembhurne%
       ,
       \and Harsh~Goud%
       ,
       \and Rudra~Pratap~Singh%
       ,
       \and Chandrashekhar~Meshram%
\thanks{The authors are with the Department of Computer Science \& Engineering, Electronics \& Communication Engineering,  
Indian Institute of Information Technology Nagpur 441108, India, and Jayawanti Haksar Govt. Post Graduate College, Betul, MP, India.}}

\markboth{IEEE Transactions on Neural Networks \& Learning Systems (Draft), January~2026}%
{Jagati \MakeLowercase{\textit{et al.}}: Class--Confidence Aware Reweighting for Long-Tailed Learning}

\begin{document}

\maketitle

\begin{abstract}
Deep neural network models degrade significantly in the long-tailed data distribution, with the overall training data dominated by a small set of classes in the head, and the tail classes obtaining less training examples. Addressing the imbalance in the classes, attention in the related literature was given mainly to the adjustments carried out in the decision space in terms of either corrections performed at the logit level in order to compensate class-prior bias, with the least attention to the optimization process resulting from the adjustments introduced through the differences in the confidences among the samples.
In the current study, we present the design of a class and confidence-aware re-weighting scheme for long-tailed learning. This scheme is purely based upon the loss level and has a complementary nature to the existing methods performing the adjustment of the logits. In the practical implementation stage of the proposed scheme, we use an $\Omega(p_t,f_c)$ function. This function enables the modulation of the contribution towards the training task based upon the confidence value of the prediction, as well as the relative frequency of the corresponding class.
Our observations in the experiments are corroborated by significant experimental results performed on the CIFAR-100-LT, ImageNet-LT, and iNaturalist2018 datasets under various values of imbalance factors that clearly authenticate the theoretical discussions above.
\end{abstract}

\begin{IEEEkeywords}
Deep neural networks, long-tailed learning, class imbalance, loss functions, logit adjustment, margin-based learning
\end{IEEEkeywords}

\section{Introduction}
\IEEEPARstart{D}{eep} learning models derived from neural network architectures \cite{Goodfellow-et-al-2016} have become the most popular choice for a broad spectrum of classification tasks. These models include speech recognition \cite{speech}, natural language processing \cite{torfi2021naturallanguageprocessingadvancements}, anomaly detection \cite{anomaly}, recommendation systems \cite{recommendation}, and health care \cite{healthcare}. In all such systems, a classifier and an encoder transform the hierarchical representation of a neural network into a discrete class, making accuracy a crucial determinant of system performance. This explains why a great deal of effort and innovation have been directed toward developing optimized architectures for greater accuracy, precision, and generalization.

Despite these advances, most real-world datasets are not collected under controlled or balanced conditions. Instead, data are often acquired from heterogeneous and naturally occurring sources, where certain classes appear far more frequently than others. This leads to imbalanced or long-tailed data distributions \cite{imbalanced-dataset,imbalanced2}, under which neural networks tend to overfit dominant (head) classes while failing to learn discriminative representations and reliable decision boundaries for under-represented (tail) classes.

To address class imbalance, a variety of strategies have been explored in prior work. A widely adopted approach is data re-sampling \cite{resample}, where minority classes are over-sampled or majority classes are under-sampled to improve class balance \cite{Chawla_2002,Rezvani_2024}. Though successful in some applications, re-sampling may end up replicating some samples or removing useful information in the process. A related but distinct approach is where the model addresses loss re-weighting based on classes. This involves assigning higher weights to the losses of under-represented classes. One such technique is the Class-Balanced Loss \cite {cui2019classbalancedlossbasedeffective}, which reweights the gradients using effective sample sizes. This method takes advantage of statistics but addresses the issue only at the class level.
In parallel, confidence-based methods such as Focal Loss \cite{lin2018focallossdenseobject} prioritize hard-to-classify samples by down-weighting well-classified ones; however, these approaches are class-agnostic and do not explicitly model long-tailed class distributions. More recently, logit-level correction and margin-based methods have been proposed to directly counteract class-prior bias in the decision space \cite{menon2021longtaillearninglogitadjustment,zhang2024gradientawarelogitadjustmentloss,ren2020balancedmetasoftmaxlongtailedvisual}. Although effective, these methods typically apply uniform, class-level corrections and do not explicitly account for sample-wise confidence variations during optimization. In contrast, our work focuses on loss-level reweighting that jointly incorporates class frequency and prediction confidence, offering a complementary mechanism to decision-space corrections by adaptively modulating gradient contributions without modifying logits or inference behavior.
\begin{itemize}
\item To propose a simple and lightweight class–confidence aware reweighting formulation that modulates sample-wise optimization dynamics without modifying logits, margins, or inference behavior. The formulation introduces a single suppression parameter $\omega$, which controls the degree of hardness modulation in a stable and interpretable manner.
\item To propose a unified weight function that multiplies the cross-entropy loss with an exponential term that involves the predicted class probability $p_t.$ of the positive class and its frequency $f_c.$ The proposed framework seeks to emphasize stronger gradients of samples from minority classes with weak confidence levels while suppressing the gradients of samples with strong confidence levels in the major classes.
\item To evaluate the effectiveness of our method through extensive experiments on common long-tailed datasets: CIFAR-100-LT~\cite{cifar100}, ImageNet-LT~\cite{imagenet-lt}, and iNaturalist~\cite{vanhorn2018inaturalistspeciesclassificationdetection}, on various values of the imbalance factor. Our reweighting method is shown to improve the accuracy of tail classes while maintaining competitive performance on head classes compared to the recent methods.
\end{itemize}

\section{Related Work}
Current methods for solving the class imbalance problem in supervised learning can be generally classified into sampling methods and loss-weighting approaches.

\subsection{Data Re-sampling}

Techniques of re-sampling data attempt to counterbalance class distributions either through over-sampling in the minority class or through under-sampling in the majority class \cite{Chawla_2002, Rezvani_2024}. Although such techniques may have a positive impact on the imbalance at the data level, certain side effects inevitably crop up. Simple over-sampling may lead to repeated samples, thereby preferring model overfitting to the lesser variation in the tail class, as opposed to learning more generalizable features \cite{Du_2024}. Under-sampling would necessarily entail losing informative samples in the major class, thereby suppressing the diversity of the data being learned. Recent works further accentuate the fact that re-sampling significantly changes over the empirical distribution of the data, causing a mismatch between the training and testing distributions \cite{Ma_2025}. Such a mismatch in distributions might affect the quality of the feature space, even if the classifier boundaries become more balanced. To overcome some of these deficiencies, recent methods aimed to decouple balancing classifier learning from representation learning or used progressive mixing schemes \cite{Li_2025} that used tail class information without vitiating the entire feature space.

\subsection{Class-aware Loss Reweighting}
In contrast to logit adjustments that adjust the boundary via model output offsets, class-aware loss weighting proposes to reduce class imbalance issues through direct adjustments to the penalty weight of each class during training. The traditional method tends to employ fixed weight schemes inversely proportional to class frequency or using the effective number of examples of each class \cite{cui2019classbalancedlossbasedeffective}, presuming each class's loss weight should be inversely proportional to its size.

However, recent works call this assumption into question by claiming that sample cardinality is a rough guideline that cannot capture the semantic or geometric variations among classes satisfactorily. Chen et al.\cite{Chen_2025_ADIR} demonstrate that tail classes tend to have an even smaller feature volume than head classes and present the Adaptive Diversity-Induced Reweighting (ADIR) method that adjusts the weights of the loss function in accordance with the natural diversity of classes rather than their sample number. To move beyond the task of static sample reweighting, it has been noticed that the seesaw effect can be caused if long-tailed recognition is treated as an optimization problem with a single objective.

For the purpose of mitigating this trade-off, the recent literature turned to the area of multi-objective optimization (MOO). As illustrated by He et al. \cite{He_2024_Pareto}, this can be achieved using a Pareto optimization approach that focuses on learning the class-weight parameters, dynamically exploring the descent directions to minimize the conflicts in the gradients of the head and tail classes. Another example of this category is the MOOSF approach \cite{Peng_2024_MOOSF}, where the strategy fusion concept can be used for the purpose of adaptive class-weight learning to keep the learning trajectory Pareto optimal.

\subsection{Hardness-aware Weighting}
The other line of work involves the use of techniques that highlight challenging instances in the training process through the adjustment of loss based on prediction confidences. Notable works in this line include the seminal work of Focal Loss \cite{lin2018focallossdenseobject}, where the loss of well-classified instances is suppressed in favor of the loss of the challenging instances. However, in the context of the long-tailed problem, the suppression of low-confidence predictions without distinction can rather contribute to the amplification of outliers \cite{Li_2025_Uncertainty}.

To overcome these limitations, recent approaches have moved beyond simple probability-based weighting. For example, gradient-norm–balanced methods \cite{Wang_2024_Gradient} adjust sample weights based on the magnitude of backpropagated gradients rather than output probabilities alone, ensuring that hard samples meaningfully contribute to representation learning without destabilizing the classifier. Similarly, uncertainty-guided frameworks \cite{Zhang_2025_Curriculum} adopt dynamic curriculum learning strategies to distinguish between \textit{useful hard} samples and \textit{harmful noisy} ones, gradually increasing emphasis on tail classes only after the model has learned robust representations for head classes.

Despite their effectiveness, these methods often rely on complex multi-stage training pipelines or auxiliary uncertainty estimation modules. In contrast, our approach achieves joint class–confidence modulation through a simple loss-level reweighting term that integrates seamlessly with existing logit-adjustment objectives, enabling single-stage training without additional architectural components.

\subsection{Logit-level Adjustment}
Recent advances in long-tailed learning have increasingly shifted attention from data-level re-sampling toward directly correcting class imbalance in the decision space through logit-level adjustment. Seminal work on Logit Adjustment (LA) \cite{menon2021longtaillearninglogitadjustment} introduced a statistically principled mechanism that shifts prediction scores according to class priors, achieving Fisher consistency for balanced error minimization.

Building upon this foundation, subsequent studies have proposed more dynamic and geometry-aware refinements. For example, \textit{Generalized Logit Adjustment} (GLA) \cite{Zhang_2024_GLA} extends boundary calibration to account for mismatches between training and testing distributions, while Gaussian-based adjustment methods \cite{Wang_2024_Gaussian} perturb logits during training to emulate a more balanced feature-space geometry. In a similar vein, adaptive margin techniques \cite{Son_2025_DBM} adjust decision boundaries based on estimated class difficulty rather than fixed frequency statistics. Despite their empirical effectiveness, these approaches typically apply uniform, class-level corrections or rely on post-hoc bias terms, which may not optimally influence representation learning dynamics, particularly during the early stages of training.

In contrast to these decision-space modifications, our work focuses on \textit{loss-level reweighting} that jointly incorporates class frequency and sample-wise confidence into a single modulation term. This design is inherently complementary to logit-adjustment methods: while LA corrects the \textit{decision boundary} at inference, our approach shapes the \textit{optimization trajectory} by adaptively modulating gradients during training. As a result, the proposed method can be seamlessly integrated with existing logit-adjusted objectives without requiring changes to model architecture, margin design, or inference procedures.

\section{Methodology}

\subsection{Problem Formulation}
We consider a long-tailed classification dataset
$\mathcal{D} = \{(x_i, y_i)\}_{i=1}^{N}$ comprising $K$ classes.
Let $N_c$ denote the number of samples belonging to class $c$, and let
$f_c = \frac{N_c}{N}$ represent its empirical class frequency.
Given an input $x$, a neural network outputs a logit vector
$\mathbf{z} \in \mathbb{R}^K$, which is converted into predicted class
probabilities $\mathbf{p} = \mathrm{softmax}(\mathbf{z})$.
We denote by $p_t$ the predicted probability assigned to the ground-truth
class $t$.

Our formulation builds upon a base loss
$\mathcal{L}_{\mathrm{base}}(p_t)$, such as the standard cross-entropy loss
or its logit-adjusted variants.
To account for both class imbalance and sample difficulty, we introduce
a sample-wise reweighting function $\Omega(p_t, f_c)$ and define the total
training objective as,
\begin{equation}
\mathcal{L}_{\mathrm{total}}(p_t, f_c)
= \Omega(p_t, f_c)\,\mathcal{L}_{\mathrm{base}}(p_t).
\end{equation}

Na\"ive importance-weighting schemes, such as $(f_c p_t)^{-1}$, suffer from
unbounded variance as $p_t \rightarrow 0$, which can result in unstable
gradients and hinder convergence. Our objective is therefore to design a
weighting function that is bounded, continuous, and differentiable, and
that jointly depends on class frequency and prediction confidence while
preserving stable optimization dynamics.

\subsection{Motivation}
Let $p_t \in [0,1]$ denote the prediction confidence associated with the
ground-truth class. We introduce a margin-like function
\begin{equation}
    m(p_t) = \omega - p_t,
\end{equation}
where $\omega \in (0,1]$ serves as a user-defined pivot that separates
low-confidence predictions from high-confidence ones. This formulation
encodes the intuition that samples with confidence below $\omega$ should
receive stronger emphasis during optimization, while overly confident
samples should be softly suppressed.

We derive the weighting function from a maximum-entropy perspective.
Following the Maximum Entropy Principle \cite{jaynes1957information}, we
seek a density function $w(p_t)$ that maximizes the expected margin utility
while remaining close to a uniform prior $u$, measured via the
Kullback--Leibler (KL) divergence \cite{kullback}. Formally, we consider the optimization
problem
\begin{equation}
    \max_{w \in \mathcal{W}} \; \mathcal{J}(w)
    = \mathbb{E}_w[m(p_t)]
    - \frac{1}{\beta_c}\,\mathrm{KL}(w \,\|\, u),
\end{equation}
subject to the normalization constraint
$\int_0^1 w(p_t)\, dp_t = 1$.
Here, $\beta_c > 0$ is a class-dependent information capacity parameter
that controls the trade-off between margin maximization and entropy
regularization. Since the prior $u$ is uniform on $[0,1]$, the KL term
reduces to the negative entropy
$\int_0^1 w(p_t)\ln w(p_t)\, dp_t$.

To solve this variational problem, we apply the method of Lagrange
multipliers \cite{lagrangian} and construct the Lagrangian functional
$\mathcal{L}(w, \lambda)$:
\begin{equation}
\begin{aligned}
\mathcal{L}(w, \lambda)
&= \int_0^1 w(p_t)(\omega - p_t)\, dp_t \\
&\quad - \frac{1}{\beta_c} \int_0^1 w(p_t)\ln w(p_t)\, dp_t \\
&\quad - \lambda \left( \int_0^1 w(p_t)\, dp_t - 1 \right).
\end{aligned}
\end{equation}

Taking the functional derivative with respect to $w(p_t)$ and setting it to
zero yields the stationary condition
\begin{equation}
    \frac{\delta \mathcal{L}}{\delta w(p_t)}
    = (\omega - p_t)
    - \frac{1}{\beta_c}\big(1 + \ln w(p_t)\big)
    - \lambda
    = 0.
\end{equation}
Rearranging terms to isolate $\ln w(p_t)$ gives
\begin{equation}
\begin{aligned}
    \frac{1}{\beta_c}\ln w(p_t)
    &= (\omega - p_t) - \lambda - \frac{1}{\beta_c}, \\
    \ln w(p_t)
    &= \beta_c(\omega - p_t) - (1 + \beta_c \lambda).
\end{aligned}
\end{equation}
Exponentiating both sides leads to an exponential-family solution
\begin{equation}
    w^*(p_t)
    = \exp\!\big(\beta_c(\omega - p_t)\big)
      \cdot \exp\!\big(-(1 + \beta_c \lambda)\big).
\end{equation}
The second term is a constant independent of $p_t$ and can be absorbed
into a normalization factor $Z(\beta_c)$. The optimal density therefore
takes the form
\begin{equation}
    w^*(p_t)
    = \frac{1}{Z(\beta_c)}
      \exp\!\big(\beta_c(\omega - p_t)\big).
\end{equation}

For the purpose of loss reweighting, the normalization constant does not
affect gradient directions and can be omitted, yielding the final
per-sample weighting function
\begin{equation}
    \Omega(p_t; \beta_c)
    \propto \exp\!\big(\beta_c(\omega - p_t)\big).
\end{equation}

\subsection{Dual-Phase Class--Confidence Coupling}
\label{sec:dual_phase}
The exponential margin-based weighting $\exp(\omega - p_t)$, derived from the
maximum-entropy formulation, naturally emphasizes low-confidence predictions.
However, this formulation is agnostic to class frequency and therefore treats
rare and frequent classes symmetrically. Such behavior is suboptimal in
long-tailed settings. To explicitly account for class imbalance, we introduce a
\emph{dual-phase} coupling mechanism that modulates the information capacity
parameter $\beta_c$ based on the prediction confidence regime.

\paragraph{Asymmetric Requirements}
Effective optimization under long-tailed distributions demands asymmetric
behavior across different confidence regimes:
\begin{itemize}
    \item \textbf{Exploration Phase ($p_t < \omega$):} When the model is uncertain,
    gradients should be \emph{amplified}, particularly for rare classes, to
    encourage representation learning and accelerate convergence.
    \item \textbf{Consolidation Phase ($p_t \ge \omega$):} When predictions are
    confident, gradients should be \emph{suppressed}, especially for dominant
    head classes that would otherwise overwhelm the loss and lead to overfitting.
\end{itemize}
To satisfy the requirements within a single unified function, we
define a \emph{Dual-Phase Modulated Frequency} $f_c'(p_t)$:
\begin{equation}
    f_c'(p_t) =
    \begin{cases}
        f_c, & p_t < \omega \quad \text{(Standard Frequency)}, \\
        1 - f_c, & p_t \ge \omega \quad \text{(Inverted Frequency)}.
    \end{cases}
    \label{eq:dual_phase_freq}
\end{equation}
This inversion has a very important function in gradient dynamics. As illustrated in
the gradient analysis (Figure~\ref{fig:grad_analysis}),
regime where the exponent $(\omega - p_t)$ is negative, a larger base
results in the production of a smaller effective weight, paving the way for a strong gradient.
suppression. By inverting the frequency ($f_c \rightarrow 1 - f_c$), frequent
classes ($f_c \approx 1$) are assigned a larger effective base
($e - (1 - f_c) \approx e$) than rare classes, ensuring that head-class gradients
are suppressed more aggressively during consolidation.

To realize this asymmetric behavior, we promote the information capacity
parameter from a static constant $\beta_c$ to a confidence-dependent function
$\beta_c(p_t)$. Specifically, we define the adaptive capacity as,
\begin{equation}
    \beta_c(p_t) = \ln\!\big(e - f_c'(p_t)\big).
\end{equation}
Substituting this adaptive capacity into the exponential-family solution yields
the final \emph{Class--Confidence Aware Reweighting (CCAR)} function:
\begin{equation}
    \Omega(p_t, f_c) = \big(e - f_c'(p_t)\big)^{\omega - p_t}.
    \label{eq:final_weight_coupling}
\end{equation}
\begin{figure}[H]
    \centering
    \includegraphics[width=0.9\linewidth]{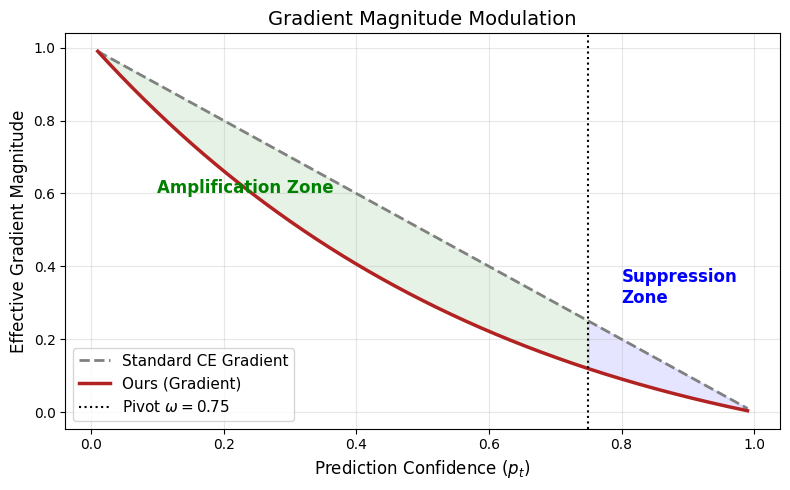}
    \caption{\textbf{Gradient Modulation Analysis.} Comparison of effective
    gradient magnitudes. The proposed method (red) amplifies gradients in the
    low-confidence region ($p_t < \omega$) to accelerate learning for hard
    samples, while suppressing gradients in the high-confidence region
    ($p_t > \omega$) to prevent overfitting to easy head-class samples.}
    \label{fig:grad_analysis}
\end{figure}
\paragraph{Theoretical Consistency}
Despite the piecewise definition of the dual-phase frequency term $f_c'$, the
resulting weighting function $\Omega(p_t, f_c)$ remains continuous across the
entire confidence domain. In particular, at the pivot point $p_t = \omega$, the
exponent $(\omega - p_t)$ vanishes, yielding
\begin{equation}
    \lim_{p_t \to \omega} \Omega(p_t, f_c)
    = (e - f_c')^{0}
    = 1.
\end{equation}
This property guarantees a smooth transition between the amplification and
suppression regimes, independent of the underlying class frequency.

\subsection{Continuity and Derivative Behavior}
By construction, the weighting function satisfies
\begin{equation}
    \Omega(\omega, f_c) = 1 \quad \forall\, c,
\end{equation}
which establishes continuity at the confidence pivot.

The derivative of $\Omega$ with respect to the confidence variable $p_t$ is given
by
\begin{equation}
    \frac{\partial \Omega}{\partial p_t}
    = -\ln\!\big(e - f_c'(p_t)\big)\,\Omega(p_t, f_c).
\end{equation}
This expression shows that the gradient magnitude is directly modulated by the
effective information capacity term $\ln(e - f_c'(p_t))$, linking confidence,
class frequency, and gradient scaling in a unified manner.

Evaluating the one-sided derivatives at the pivot $p_t = \omega$ yields
\begin{equation}
\lim_{p_t \to \omega^-} \nabla \Omega
= -\ln(e - f_c), \qquad
\lim_{p_t \to \omega^+} \nabla \Omega
= -\ln\!\big(e - (1 - f_c)\big).
\end{equation}
Although the derivative exhibits a finite jump due to the phase transition, this
discontinuity is uniformly bounded:
\begin{equation}
    \left|
    \ln\!\frac{e - f_c}{e - 1 + f_c}
    \right|
    \le \ln\!\frac{e}{e - 1}
    \approx 0.46,
    \label{eq:derivative_jump}
\end{equation}
which ensures Lipschitz continuity with controlled non-smoothness. Consequently,
the proposed weighting function preserves stable gradient behavior and does not
introduce pathological optimization artifacts near the confidence pivot.

\subsection{Gradient Dynamics Analysis}
\label{sec:gradient_analysis}
For notational simplicity, we define
$\gamma_c = e - f_c'(p_t)$.
We assume the base loss to be the standard Cross-Entropy,
$\mathcal{L}_{\text{base}} = -\ln p_t$.
Since the dual-phase frequency term $f_c'(p_t)$ is piecewise constant
(Eq.~\ref{eq:dual_phase_freq}), its derivative with respect to $p_t$ is
zero almost everywhere (i.e., for all $p_t \neq \omega$).
Accordingly, $\gamma_c$ can be treated as locally constant during
differentiation.

\begin{lemma}[Gradient Derivation]
The gradient of the CCAR loss with respect to the logit vector
$\mathbf{z}$ takes the form
\begin{equation}
    \nabla_{\mathbf{z}} \mathcal{L}_{\text{total}}
    = \Psi(p_t, \gamma_c)\,(\mathbf{p} - \mathbf{e}_t),
    \label{eq:gradient_full}
\end{equation}
where $\mathbf{e}_t$ denotes the one-hot encoding of the target class, and
the scalar \emph{modulation factor} $\Psi$ is given by
\begin{equation}
    \Psi(p_t, \gamma_c)
    = \gamma_c^{\omega - p_t}
      \left( 1 - p_t \ln \gamma_c \ln p_t \right).
\end{equation}
\end{lemma}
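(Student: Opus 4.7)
The plan is to apply the chain rule to $\mathcal{L}_{\text{total}} = \gamma_c^{\omega - p_t}(-\ln p_t)$ and then factor the resulting expression so that the familiar softmax cross-entropy residual $\mathbf{p} - \mathbf{e}_t$ appears on the right. The target form $\Psi(p_t,\gamma_c)(\mathbf{p}-\mathbf{e}_t)$ strongly hints that all $p_t$-dependence collapses into a single scalar modulation factor, so the bookkeeping task is essentially to collect terms correctly.

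First I would invoke the remark immediately preceding the lemma: because $f_c'(p_t)$ is piecewise constant away from the pivot $p_t=\omega$, the quantity $\gamma_c = e - f_c'(p_t)$ can be treated as a local constant during differentiation. This turns $\gamma_c^{\omega-p_t}$ into a plain scalar exponential in $p_t$ and is what makes the piecewise construction tractable. I would then split $\mathcal{L}_{\text{total}}$ into its two $p_t$-factors and apply the product rule. The derivative of $\gamma_c^{\omega-p_t}$ contributes a factor $-\ln\gamma_c$, whereas the derivative of $-\ln p_t$ contributes $-1/p_t$. Collecting yields a compact scalar derivative of the form $\gamma_c^{\omega-p_t}\bigl[\ln\gamma_c\,\ln p_t - 1/p_t\bigr]$.

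Next I would propagate this scalar derivative through the softmax using the classical identity $\nabla_{\mathbf{z}} p_t = p_t(\mathbf{e}_t - \mathbf{p})$, equivalently $\nabla_{\mathbf{z}}(-\ln p_t) = \mathbf{p} - \mathbf{e}_t$. Multiplying the scalar derivative by $p_t(\mathbf{e}_t - \mathbf{p})$ and absorbing the extra factor of $p_t$ into the bracket converts the $-1/p_t$ term into $-1$ and flips the sign of the residual, producing $\gamma_c^{\omega-p_t}\bigl[1 - p_t\ln\gamma_c\,\ln p_t\bigr](\mathbf{p}-\mathbf{e}_t)$. This is exactly Eq.~\ref{eq:gradient_full} with $\Psi(p_t,\gamma_c) = \gamma_c^{\omega-p_t}(1 - p_t\ln\gamma_c\,\ln p_t)$.

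The only genuinely delicate point is the status of the pivot $p_t = \omega$, where $f_c'$ is discontinuous and the classical derivative of $\gamma_c^{\omega-p_t}$ does not exist. I would dispose of this exactly as the paper already suggests: the jump set has measure zero in the confidence domain, and the uniformly bounded one-sided jump in Eq.~\ref{eq:derivative_jump} guarantees that the gradient formula is valid in the sub-gradient sense at the pivot and almost everywhere as a classical gradient. Everything else is a routine product-rule computation; I do not expect any further obstacle beyond careful sign tracking when absorbing $p_t$ into the $-1/p_t$ term.
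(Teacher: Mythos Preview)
Your proposal is correct and follows essentially the same approach as the paper: chain rule through $p_t$, product rule on $\gamma_c^{\omega-p_t}(-\ln p_t)$ treating $\gamma_c$ as locally constant, then the softmax identity $\nabla_{\mathbf{z}} p_t = p_t(\mathbf{e}_t-\mathbf{p})$ to absorb the $1/p_t$ and flip the residual sign. Your additional remark on the measure-zero pivot and the bounded subgradient there is a welcome clarification that the paper's own proof omits.
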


\begin{proof}
We apply the chain rule,
$\nabla_{\mathbf{z}} \mathcal{L}
= \frac{\partial \mathcal{L}}{\partial p_t}
  \nabla_{\mathbf{z}} p_t$.

\textbf{Step 1: Derivative with respect to $p_t$.}
We decompose the total loss into a weighting term and a base loss term:
\begin{align*}
    u(p_t) &= \gamma_c^{\omega - p_t}, \\
    v(p_t) &= -\ln p_t .
\end{align*}
Using the product rule,
$\frac{\partial \mathcal{L}}{\partial p_t} = u'v + uv'$:
\begin{enumerate}
    \item The derivative of the weighting term is
    \begin{equation}
        u'(p_t)
        = \frac{d}{dp_t}
          \exp\!\big((\omega - p_t)\ln \gamma_c\big)
        = -\ln \gamma_c \cdot \gamma_c^{\omega - p_t}.
    \end{equation}
    \item The derivative of the Cross-Entropy term is
    \begin{equation}
        v'(p_t) = -\frac{1}{p_t}.
    \end{equation}
\end{enumerate}
Substituting these expressions yields
\begin{align}
    \frac{\partial \mathcal{L}}{\partial p_t}
    &= \left(-\ln \gamma_c \cdot \gamma_c^{\omega - p_t}\right)(-\ln p_t)
       + \gamma_c^{\omega - p_t}\left(-\frac{1}{p_t}\right) \nonumber \\
    &= \gamma_c^{\omega - p_t} \ln \gamma_c \ln p_t
       - \frac{\gamma_c^{\omega - p_t}}{p_t} \nonumber \\
    &= \frac{\gamma_c^{\omega - p_t}}{p_t}
       \left(p_t \ln \gamma_c \ln p_t - 1\right).
    \label{eq:grad_wrt_p}
\end{align}

\textbf{Step 2: Derivative with respect to logits $\mathbf{z}$.}
The derivative of the softmax probability for the target class is
\begin{equation}
    \nabla_{\mathbf{z}} p_t
    = p_t(\mathbf{e}_t - \mathbf{p}).
\end{equation}
Combining this with Eq.~\eqref{eq:grad_wrt_p} gives
\begin{align}
    \nabla_{\mathbf{z}} \mathcal{L}
    &= \left[
        \frac{\gamma_c^{\omega - p_t}}{p_t}
        (p_t \ln \gamma_c \ln p_t - 1)
       \right]
       \left[
        p_t(\mathbf{e}_t - \mathbf{p})
       \right] \nonumber \\
    &= \gamma_c^{\omega - p_t}
       (p_t \ln \gamma_c \ln p_t - 1)
       (\mathbf{e}_t - \mathbf{p}).
\end{align}
Rewriting the expression in the conventional gradient direction
$(\mathbf{p} - \mathbf{e}_t)$ yields
\begin{equation}
    \nabla_{\mathbf{z}} \mathcal{L}
    = \gamma_c^{\omega - p_t}
      \left(1 - p_t \ln \gamma_c \ln p_t\right)
      (\mathbf{p} - \mathbf{e}_t).
\end{equation}
\end{proof}

\paragraph{Self-Regulating Properties}
The derived expression for the gradient indicates two significant stability
properties:
\begin{itemize}
\item \textbf{Boundedness}
The function $x \ln x$ is bounded on the interval $[0,1]$, with a minimum at $-1/e$
Therefore, the modulation factor $\Psi(p_t, \gamma_c)$
finite for all $p_t \in (0,1]$, to prevent gradient explosion even in
rare classes.
\item \textbf{Entropy Awareness}
The expression $-p_t \ln p_t$ stands for the entropy of prediction,
Adaptively scaling gradients based on the uncertainty. This mechanism*
that prioritize informative, uncertain samples over just random ones
with large loss values.
\end{itemize}

\begin{figure}[H]
    \centering
    \includegraphics[width=0.8\linewidth]{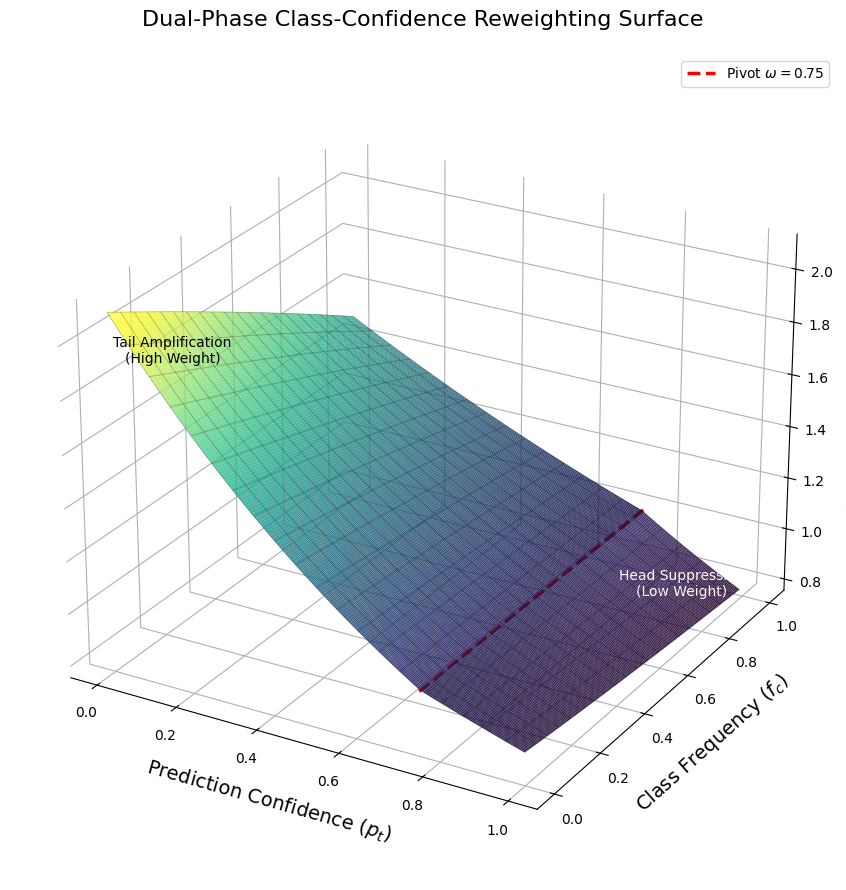}
    \caption{
   Three-dimensional wireframe plot of weighting function
$\Omega(p_t,f_c)$. The above surface shows how the weight varies with
prediction confidence $p_t$ for varying class frequencies $f_c$.
The pivot point $\omega$ helps distinguish between the low and high confidence regions, indicating
asymmetric pattern for tail and head classes with smooth
transitions.
    }
    \label{fig:omega_3d}
\end{figure}

\section{Experiments}

\subsection{Datasets}
We test the proposed method on three popular benchmarks for
long-tailed visual recognition: CIFAR-100-LT \cite{cifar100},
ImageNet-LT \cite{imagenet-lt}, and iNaturalist2018
\cite{vanhorn2018inaturalistspeciesclassificationdetection}. It ranges both synthetically created and naturally occurring long-tailed
distributions, and it handles small- or large-scale classification problems.
Collectively, the above functions form a holistic framework for testing
under various levels of class imbalance.

Summary statistics for the datasets are shown in
Table~\ref{tab:datasets}.

\begin{table}[H]
\centering
\caption{Statistics of the long-tailed datasets used in our experiments.}
\label{tab:datasets}
\begin{tabular}{lccc}
\toprule
Dataset & Classes & Train & Test \\
\midrule
CIFAR-100-LT    & 100   & 50,000  & 10,000 \\
ImageNet-LT    & 1,000 & 115,846 & 50,000 \\
iNaturalist2018& 8,142 & 437,513 & 24,426 \\
\bottomrule
\end{tabular}
\end{table}

\subsection{Implementation Details}
All experiments are implemented using PyTorch.
For CIFAR-100-LT, we follow the same training protocol, network
architecture, data preprocessing, and optimization settings as the
Balanced Softmax framework \cite{ren2020balancedmetasoftmaxlongtailedvisual}.
For ImageNet-LT and iNaturalist2018, we strictly adhere to the experimental
setup described in \cite{zhang2022longtailedclassificationgradualbalanced}.

In all settings, the proposed reweighting term is applied on top of the
corresponding logit-adjusted loss (e.g., Logit Adjustment or Balanced
Softmax) without modifying the underlying model architecture, training
schedule, or evaluation procedure. This design ensures a fair and
controlled comparison by isolating the effect of the proposed weighting
strategy. Unless otherwise specified, we set the confidence pivot
$\omega = 0.75$ and employ the adaptive capacity
$\beta_c(p_t)$ defined in Eq.~\ref{eq:final_weight_coupling}.

\subsection{Experimental Results}
We fix the suppression factor to $\omega=0.75$ in all experiments as we mentioned in Section 5.4, as this value consistently showed the best or near-best performance across CIFAR-100-LT, ImageNet-LT, and iNaturalist2018 in our preliminary validation.

\textbf{CIFAR-100-LT.}
As reported in Table~\ref{tab:cifar_results}, the proposed method consistently
improves performance across all imbalance factors.
When applied to vanilla cross-entropy, our reweighting strategy increases
Top-1 accuracy from 34.84\% to 36.12\% at IF=200, from 38.43\% to 39.86\% at
IF=100, and from 43.90\% to 45.25\% at IF=50.
When combined with logit-adjustment techniques, the proposed approach yields
further gains. In particular, integrating our method with Balanced Softmax
(Ours + BS) achieves 46.20\% and 48.91\% under IF=200 and IF=100, respectively,
substantially outperforming the corresponding Balanced Softmax baselines
(43.30\% and 45.00\%).
Notably, compared to the margin-based LDAM-DRW method
\cite{cao2019learningimbalanceddatasetslabeldistributionaware}, our approach
achieves superior performance (e.g., 52.78\% vs.\ 47.62\% at IF=50), while
retaining a simpler single-stage training pipeline.

\textbf{ImageNet-LT.}
In ImageNet-LT with a ResNet-50 backbone
(Table~\ref{tab:imagenet_lt_complete_results}), supplementing cross-entropy with the proposed suppression mechanism improves Top-1 accuracy from 41.60\% to
45.62\%, which reflects a gain of +4.02\%.
Combining the proposed method with Logit Adjustment gives
consistent improvements: Ours + LA achieves 52.76\% compared to 51.10\% for
vanilla Logit Adjustment \cite{menon2021longtaillearninglogitadjustment}.
These findings here indicate that, as pertaining to this, dominant gradients are explicitly regulated in nature.
training yields complementarities beyond prior logit-shifting.
strategies by themselves.

\textbf{iNaturalist2018.}
As it can be seen from the Table~\ref{tab:inat_results} and shown in
Proposed method. Figure~\ref{fig:acc_breakdown} shows that
improvements on all frequency subsets of iNaturalist2018.
Using cross-entropy as the base loss, our approach improves overall accuracy.
from 61.70\% to 64.90\%, with gains observed across the \textit{Many}
(72.20\% $\rightarrow$ 72.68\%), $\textit{Medium}$
63.00\% → 63.91\%, and Few
(57.20\% → 58.12\%) categories.
When combined with Balanced Softmax, the proposed method yields an overall
it achieves an accuracy of 70.10\%, which is higher than using only Balanced Softmax at 69.80\%
\cite{ren2020balancedmetasoftmaxlongtailedvisual}
These results reveal an improved decision boundary allocation for rare
classes in fine-grained and naturally long-tailed recognition settings.

\begin{table}[H]
\centering
\caption{Overall Top-1 accuracy (\%) on CIFAR-100-LT with different imbalance factors.}
\label{tab:cifar_results}
\begin{tabular}{lccc}
\toprule
Method & IF=200 & IF=100 & IF=50 \\
\midrule
\multicolumn{4}{l}{\textit{Standard / Re-weighting}} \\
Cross-Entropy~\cite{mao2023crossentropylossfunctionstheoretical} & 34.84 & 38.43 & 43.90 \\
Focal Loss~\cite{lin2018focallossdenseobject} & 35.60 & 38.40 & 44.30 \\
Class-Balanced~\cite{cui2019classbalancedlossbasedeffective} & -- & 39.60 & 45.17 \\
Ours + CE & \textbf{36.12} & \textbf{39.86} & \textbf{45.25} \\
\midrule
\multicolumn{4}{l}{\textit{Margin-based}} \\
LDAM-DRW~\cite{cao2019learningimbalanceddatasetslabeldistributionaware} & 38.91 & 42.04 & 47.62 \\
\midrule
\multicolumn{4}{l}{\textit{Logit-Adjustment}} \\
Logit Adjust~\cite{menon2021longtaillearninglogitadjustment} & -- & 43.90 & 52.30 \\
Balanced Softmax~\cite{ren2020balancedmetasoftmaxlongtailedvisual} & 43.30 & 45.00 & 50.50 \\
Focal + LA & -- & 43.94 & 52.01 \\
Focal + BS & 43.36 & 45.11 & 49.98 \\
Ours + LA & \textbf{45.93} & \textbf{44.65} & \textbf{52.78} \\
Ours + BS & \textbf{46.20} & \textbf{48.91} & \textbf{51.94} \\
\midrule
\multicolumn{4}{l}{\textit{Decoupled / Two-stage}} \\
cRT + mixup~\cite{kang2020decouplingrepresentationclassifierlongtailed} & 41.73 & 45.12 & 50.86 \\
PaCo~\cite{cui2021parametriccontrastivelearning} & -- & 52.00 & 56.00 \\
\bottomrule
\end{tabular}
\end{table}

\begin{table}[H]
\centering
\caption{Overall Top-1 accuracy (\%) on ImageNet-LT using ResNet-50}
\label{tab:imagenet_lt_complete_results}
\begin{tabular}{lc}
\toprule
Method & Overall \\
\midrule
\multicolumn{2}{l}{\textit{Standard / Re-weighting}} \\
Cross-Entropy~\cite{mao2023crossentropylossfunctionstheoretical} & 41.60 \\
CB-Focal~\cite{lin2018focallossdenseobject} & 43.90 \\
Ours + CE & \textbf{45.62} \\
\midrule
\multicolumn{2}{l}{\textit{Margin-based}} \\
LDAM-DRW~\cite{cao2019learningimbalanceddatasetslabeldistributionaware} & 48.80 \\
\midrule
\multicolumn{2}{l}{\textit{Logit-Adjustment}} \\
Logit Adjust~\cite{menon2021longtaillearninglogitadjustment} & 51.10 \\
Ours + LA & \textbf{52.76} \\
\midrule
\multicolumn{2}{l}{\textit{Decoupled / Two-stage}} \\
cRT~\cite{kang2020decouplingrepresentationclassifierlongtailed} & 47.30 \\
\midrule
\multicolumn{2}{l}{\textit{Normalization-based}} \\
$\tau$-normalize~\cite{kang2020decouplingrepresentationclassifierlongtailed} & 49.40 \\ 
\bottomrule
\end{tabular}
\end{table}

\begin{table}[H]
\centering
\caption{Top-1 accuracy (\%) on iNaturalist2018 using ResNet-50.}
\label{tab:inat_results}
\begin{tabular}{lcccc}
\toprule
Method & Many & Medium & Few & Overall \\
\midrule
\multicolumn{4}{l}{\textit{Standard / Re-weighting}} \\
Cross-Entropy~\cite{mao2023crossentropylossfunctionstheoretical} & 72.20 & 63.00 & 57.20 & 61.70 \\
CB-Focal~\cite{lin2018focallossdenseobject} & -- & -- & -- & 61.10 \\
Ours + CE & \textbf{72.68} & \textbf{63.91} & \textbf{58.12} & \textbf{64.90} \\
\midrule
\multicolumn{4}{l}{\textit{Margin-based}} \\
LDAM-DRW~\cite{cao2019learningimbalanceddatasetslabeldistributionaware} & -- & -- & -- & 64.60 \\
\midrule
\multicolumn{4}{l}{\textit{Logit-Adjustment}} \\
Logit Adjust~\cite{menon2021longtaillearninglogitadjustment} & -- & -- & -- & 66.40 \\
Balanced Softmax~\cite{ren2020balancedmetasoftmaxlongtailedvisual} & -- & -- & -- & 69.80 \\
ALA~\cite{zhang2024gradientawarelogitadjustmentloss} & 71.30 & 70.80 & 70.40 & 70.70 \\
Ours + LA & \textbf{71.37} & \textbf{69.25} & \textbf{68.10} & \textbf{69.57} \\
Ours + BS & \textbf{71.60} & \textbf{69.91} & \textbf{68.81} & \textbf{70.10} \\
\midrule
\multicolumn{4}{l}{\textit{Decoupled / Two-stage}} \\
cRT~\cite{kang2020decouplingrepresentationclassifierlongtailed} & 69.00 & 66.00 & 63.20 & 65.20 \\
LWS~\cite{kang2020decouplingrepresentationclassifierlongtailed} & 65.00 & 66.30 & 65.50 & 65.90 \\
BBN~\cite{zhou2020bbnbilateralbranchnetworkcumulative} & 49.40 & 70.80 & 65.30 & 66.30 \\
\bottomrule
\end{tabular}
\end{table}

The proposed approach systematically beats the cross-entropy baseline
across all the frequency splits. The most dramatic improvements occur in the \emph{Few} and \emph{Medium} categories, validating the effectiveness of the proposed tail-focused gradient amplification strategy. show the better results achieved by the proposed method.
\begin{figure}[H]
    \centering
    \includegraphics[width=0.9\linewidth]{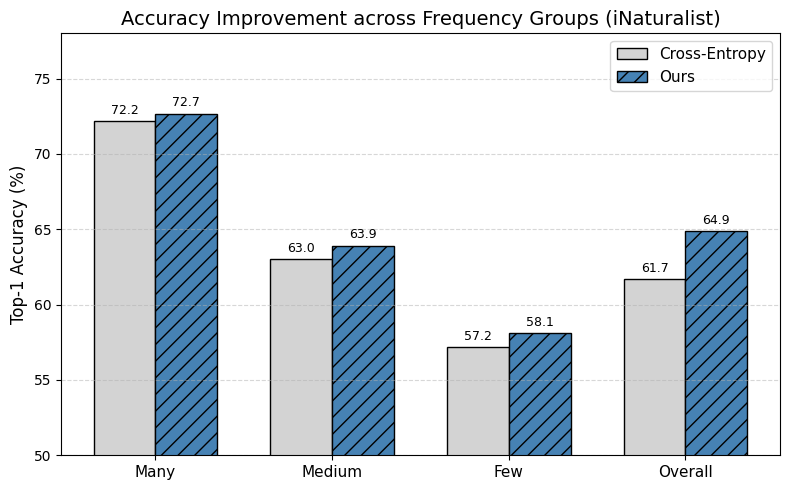}
    \caption{\textbf{Performance by Frequency Group.}
   }
    \label{fig:acc_breakdown}
\end{figure}

\subsection{Ablation Study}
We investigate the sensitivity of the confidence pivot hyperparameter
$\omega$ on large-scale, naturally long-tailed datasets.
To isolate the effect of the proposed confidence-based modulation, we use
standard Cross-Entropy (CE) as the base loss throughout this ablation.
Experiments are conducted on ImageNet-LT and iNaturalist2018 following
their standard training protocols, with all other settings kept fixed.
The resulting Top-1 accuracies are reported in
Table~\ref{tab:omega_ablation_large}.

\begin{table}[H]
\centering
\caption{Ablation study on the suppression factor $\omega$.
Top-1 accuracy (\%) is reported using ResNet-50.}
\label{tab:omega_ablation_large}
\begin{tabular}{lcc}
\toprule
$\omega$ & ImageNet-LT & iNaturalist2018 \\
\midrule
0.25 & 42.68 & 60.81 \\
0.50 & 43.90 & 61.06 \\
\textbf{0.75} & \textbf{44.12} & \textbf{62.90} \\
1.00 & 44.02 & 62.38 \\
\bottomrule
\end{tabular}
\end{table}

As shown in Table~\ref{tab:omega_ablation_large}, setting $\omega = 0.75$
consistently yields the best performance across both datasets.
This choice strikes an effective balance: it allows sufficient gradient
amplification for uncertain tail-class samples ($p_t < 0.75$), while
triggering suppression for highly confident predictions
($p_t \ge 0.75$) that are predominantly associated with head classes.

An overly aggressive suppression threshold ($\omega = 0.25$) leads to
noticeable performance degradation (e.g., 42.68\% on ImageNet-LT), likely
because informative samples are down-weighted prematurely during
training. Conversely, setting $\omega = 1.0$ results in insufficient
suppression of easy examples, yielding diminishing returns.
Observing the consistent trends across datasets of varying scale and
imbalance, we fix $\omega = 0.75$ in all main experiments.

\section{Conclusion}
In this paper, we have proposed a class--confidence aware reweighting strategy
for the purpose of long-tailed learning that jointly incorporates prediction confidence
and class frequency directly at the loss level.
The proposed formulation adaptively focuses the  less confidence examples from
under-represented classes while gently suppressing the over-confident
head-class gradients, without changing logits, margins, or inference
behavior.
Our theoretical work illustrates that the reweighting process
helps to preserve the gradient boundedness and stable optimization dynamics.
Upon extensive experiments conducted on conventional long-tailed benchmarks, there is evidence of clear accuracy improvements, especially in the presence of severe class imbalance, and confirm
that the proposed approach serves as an effective and lightweight
complement to existing logit-adjustment methods.

\bibliography{sources}

@book{Goodfellow-et-al-2016,
    title={Deep Learning},
    author={Ian Goodfellow and Yoshua Bengio and Aaron Courville},
    publisher={MIT Press},
    note={\url{http://www.deeplearningbook.org}},
    year={2016}
}

@article{imbalanced-dataset,
author = {Altalhan, Manahel and Algarni, Abdulmohsen and Monia, Turki},
year = {2025},
month = {01},
pages = {1-1},
title = {Imbalanced Data Problem in Machine Learning: A Review},
volume = {PP},
journal = {IEEE Access},
doi = {10.1109/ACCESS.2025.3531662}
}

@article{Chawla_2002,
   title={SMOTE: Synthetic Minority Over-sampling Technique},
   volume={16},
   ISSN={1076-9757},
   url={http://dx.doi.org/10.1613/jair.953},
   DOI={10.1613/jair.953},
   journal={Journal of Artificial Intelligence Research},
   publisher={AI Access Foundation},
   author={Chawla, N. V. and Bowyer, K. W. and Hall, L. O. and Kegelmeyer, W. P.},
   year={2002},
   month=jun, pages={321–357} }

@article{Rezvani_2024,
   title={Methods for class-imbalanced learning with support vector machines: a review and an empirical evaluation},
   volume={28},
   ISSN={1433-7479},
   url={http://dx.doi.org/10.1007/s00500-024-09931-5},
   DOI={10.1007/s00500-024-09931-5},
   number={20},
   journal={Soft Computing},
   publisher={Springer Science and Business Media LLC},
   author={Rezvani, Salim and Pourpanah, Farhad and Lim, Chee Peng and Wu, Q. M. Jonathan},
   year={2024},
   month=jul, pages={11873–11894} }

@misc{cui2019classbalancedlossbasedeffective,
      title={Class-Balanced Loss Based on Effective Number of Samples}, 
      author={Yin Cui and Menglin Jia and Tsung-Yi Lin and Yang Song and Serge Belongie},
      year={2019},
      eprint={1901.05555},
      archivePrefix={arXiv},
      primaryClass={cs.CV},
      url={https://arxiv.org/abs/1901.05555}, 
}

@misc{lin2018focallossdenseobject,
      title={Focal Loss for Dense Object Detection}, 
      author={Tsung-Yi Lin and Priya Goyal and Ross Girshick and Kaiming He and Piotr Dollár},
      year={2018},
      eprint={1708.02002},
      archivePrefix={arXiv},
      primaryClass={cs.CV},
      url={https://arxiv.org/abs/1708.02002}, 
}

@misc{menon2021longtaillearninglogitadjustment,
      title={Long-tail learning via logit adjustment}, 
      author={Aditya Krishna Menon and Sadeep Jayasumana and Ankit Singh Rawat and Himanshu Jain and Andreas Veit and Sanjiv Kumar},
      year={2021},
      eprint={2007.07314},
      archivePrefix={arXiv},
      primaryClass={cs.LG},
      url={https://arxiv.org/abs/2007.07314}, 
}

@misc{ren2020balancedmetasoftmaxlongtailedvisual,
      title={Balanced Meta-Softmax for Long-Tailed Visual Recognition}, 
      author={Jiawei Ren and Cunjun Yu and Shunan Sheng and Xiao Ma and Haiyu Zhao and Shuai Yi and Hongsheng Li},
      year={2020},
      eprint={2007.10740},
      archivePrefix={arXiv},
      primaryClass={cs.LG},
      url={https://arxiv.org/abs/2007.10740}, 
}

@misc{cao2019learningimbalanceddatasetslabeldistributionaware,
      title={Learning Imbalanced Datasets with Label-Distribution-Aware Margin Loss}, 
      author={Kaidi Cao and Colin Wei and Adrien Gaidon and Nikos Arechiga and Tengyu Ma},
      year={2019},
      eprint={1906.07413},
      archivePrefix={arXiv},
      primaryClass={cs.LG},
      url={https://arxiv.org/abs/1906.07413}, 
}

@misc{cifar100,
      title={Learning Imbalanced Datasets with Label-Distribution-Aware Margin Loss}, 
      author={Kaidi Cao and Colin Wei and Adrien Gaidon and Nikos Arechiga and Tengyu Ma},
      year={2019},
      eprint={1906.07413},
      archivePrefix={arXiv},
      primaryClass={cs.LG},
      url={https://arxiv.org/abs/1906.07413}, 
}

@misc{vanhorn2018inaturalistspeciesclassificationdetection,
      title={The iNaturalist Species Classification and Detection Dataset}, 
      author={Grant Van Horn and Oisin Mac Aodha and Yang Song and Yin Cui and Chen Sun and Alex Shepard and Hartwig Adam and Pietro Perona and Serge Belongie},
      year={2018},
      eprint={1707.06642},
      archivePrefix={arXiv},
      primaryClass={cs.CV},
      url={https://arxiv.org/abs/1707.06642}, 
}

@misc{mao2023crossentropylossfunctionstheoretical,
      title={Cross-Entropy Loss Functions: Theoretical Analysis and Applications}, 
      author={Anqi Mao and Mehryar Mohri and Yutao Zhong},
      year={2023},
      eprint={2304.07288},
      archivePrefix={arXiv},
      primaryClass={cs.LG},
      url={https://arxiv.org/abs/2304.07288}, 
}

@misc{kang2020decouplingrepresentationclassifierlongtailed,
      title={Decoupling Representation and Classifier for Long-Tailed Recognition}, 
      author={Bingyi Kang and Saining Xie and Marcus Rohrbach and Zhicheng Yan and Albert Gordo and Jiashi Feng and Yannis Kalantidis},
      year={2020},
      eprint={1910.09217},
      archivePrefix={arXiv},
      primaryClass={cs.CV},
      url={https://arxiv.org/abs/1910.09217}, 
}

@misc{imagenet-lt,
      title={Large-Scale Long-Tailed Recognition in an Open World}, 
      author={Ziwei Liu and Zhongqi Miao and Xiaohang Zhan and Jiayun Wang and Boqing Gong and Stella X. Yu},
      year={2019},
      eprint={1904.05160},
      archivePrefix={arXiv},
      primaryClass={cs.CV},
      url={https://arxiv.org/abs/1904.05160}, 
}

@misc{zhang2024gradientawarelogitadjustmentloss,
      title={Gradient-Aware Logit Adjustment Loss for Long-tailed Classifier}, 
      author={Fan Zhang and Wei Qin and Weijieying Ren and Lei Wang and Zetong Chen and Richang Hong},
      year={2024},
      eprint={2403.09036},
      archivePrefix={arXiv},
      primaryClass={cs.CV},
      url={https://arxiv.org/abs/2403.09036}, 
}

@misc{cui2021parametriccontrastivelearning,
      title={Parametric Contrastive Learning}, 
      author={Jiequan Cui and Zhisheng Zhong and Shu Liu and Bei Yu and Jiaya Jia},
      year={2021},
      eprint={2107.12028},
      archivePrefix={arXiv},
      primaryClass={cs.CV},
      url={https://arxiv.org/abs/2107.12028}, 
}

@misc{zhang2022longtailedclassificationgradualbalanced,
      title={Long-Tailed Classification with Gradual Balanced Loss and Adaptive Feature Generation}, 
      author={Zihan Zhang and Xiang Xiang},
      year={2022},
      eprint={2203.00452},
      archivePrefix={arXiv},
      primaryClass={cs.CV},
      url={https://arxiv.org/abs/2203.00452}, 
}

@misc{zhou2020bbnbilateralbranchnetworkcumulative,
      title={BBN: Bilateral-Branch Network with Cumulative Learning for Long-Tailed Visual Recognition}, 
      author={Boyan Zhou and Quan Cui and Xiu-Shen Wei and Zhao-Min Chen},
      year={2020},
      eprint={1912.02413},
      archivePrefix={arXiv},
      primaryClass={cs.CV},
      url={https://arxiv.org/abs/1912.02413}, 
}

@article{jaynes1957information,
  title={Information theory and statistical mechanics},
  author={Jaynes, Edwin T},
  journal={Physical Review},
  volume={106},
  number={4},
  pages={620},
  year={1957},
  publisher={APS}
}

@ARTICLE{speech,
  author={Nassif, Ali Bou and Shahin, Ismail and Attili, Imtinan and Azzeh, Mohammad and Shaalan, Khaled},
  journal={IEEE Access}, 
  title={Speech Recognition Using Deep Neural Networks: A Systematic Review}, 
  year={2019},
  volume={7},
  number={},
  pages={19143-19165},
  keywords={Hidden Markov models;Speech recognition;Neural networks;Deep learning;Feature extraction;Computer architecture;Acoustics;Speech recognition;deep neural network;systematic review},
  doi={10.1109/ACCESS.2019.2896880}}

@misc{torfi2021naturallanguageprocessingadvancements,
      title={Natural Language Processing Advancements By Deep Learning: A Survey}, 
      author={Amirsina Torfi and Rouzbeh A. Shirvani and Yaser Keneshloo and Nader Tavaf and Edward A. Fox},
      year={2021},
      eprint={2003.01200},
      archivePrefix={arXiv},
      primaryClass={cs.CL},
      url={https://arxiv.org/abs/2003.01200}, 
}

@article{anomaly,
   title={Deep Learning for Anomaly Detection: A Review},
   volume={54},
   ISSN={1557-7341},
   url={http://dx.doi.org/10.1145/3439950},
   DOI={10.1145/3439950},
   number={2},
   journal={ACM Computing Surveys},
   publisher={Association for Computing Machinery (ACM)},
   author={Pang, Guansong and Shen, Chunhua and Cao, Longbing and Hengel, Anton Van Den},
   year={2021},
   month=mar, pages={1–38} }

@article{recommendation,
   title={Deep Learning Based Recommender System: A Survey and New Perspectives},
   volume={52},
   ISSN={1557-7341},
   url={http://dx.doi.org/10.1145/3285029},
   DOI={10.1145/3285029},
   number={1},
   journal={ACM Computing Surveys},
   publisher={Association for Computing Machinery (ACM)},
   author={Zhang, Shuai and Yao, Lina and Sun, Aixin and Tay, Yi},
   year={2019},
   month=feb, pages={1–38} }

@article{healthcare,
title = {A review on deep learning approaches in healthcare systems: Taxonomies, challenges, and open issues},
journal = {Journal of Biomedical Informatics},
volume = {113},
pages = {103627},
year = {2021},
issn = {1532-0464},
doi = {https://doi.org/10.1016/j.jbi.2020.103627},
url = {https://www.sciencedirect.com/science/article/pii/S1532046420302550},
author = {Shahab Shamshirband and Mahdis Fathi and Abdollah Dehzangi and Anthony Theodore Chronopoulos and Hamid Alinejad-Rokny},
keywords = {Machine learning, Deep neural network, Healthcare applications, Diagnostics tools, Health data analytics},
abstract = {In the last few years, the application of Machine Learning approaches like Deep Neural Network (DNN) models have become more attractive in the healthcare system given the rising complexity of the healthcare data. Machine Learning (ML) algorithms provide efficient and effective data analysis models to uncover hidden patterns and other meaningful information from the considerable amount of health data that conventional analytics are not able to discover in a reasonable time. In particular, Deep Learning (DL) techniques have been shown as promising methods in pattern recognition in the healthcare systems. Motivated by this consideration, the contribution of this paper is to investigate the deep learning approaches applied to healthcare systems by reviewing the cutting-edge network architectures, applications, and industrial trends. The goal is first to provide extensive insight into the application of deep learning models in healthcare solutions to bridge deep learning techniques and human healthcare interpretability. And then, to present the existing open challenges and future directions.}
}

@article{imbalanced2,
  title={A survey on imbalanced learning: latest research, applications and future directions},
  author={Wuxing Chen and Kaixiang Yang and Zhiwen Yu and Yifan Shi and C. L. Philip Chen},
  journal={Artificial Intelligence Review},
  year={2024},
  volume={57},
  url={https://api.semanticscholar.org/CorpusID:269668300}
}

@article{resample,
author = {Amelia, Tri and Hasibuan, Mila and Pane, Rahmadani},
year = {2022},
month = {05},
pages = {628-634},
title = {Comparative analysis of resampling techniques on Machine Learning algorithm},
volume = {7},
journal = {Sinkron},
doi = {10.33395/sinkron.v7i2.11427}
}

@article{Du_2024,
  title={Tradeoffs Between Richness and Bias of Augmented Data in Long-Tail Recognition},
  author={Du, Wen and Ma, Yu and Chen, Jie},
  journal={Entropy},
  volume={26},
  number={10},
  pages={839},
  year={2024},
  publisher={MDPI}
}

@article{Ma_2025,
  title={Long-tailed Recognition with Model Rebalancing},
  author={Ma, Li and Zhang, Yifan and Wang, Jian},
  journal={arXiv preprint arXiv:2510.08177},
  year={2025}
}

@article{Li_2025,
  title={A progressive decay Mixup data augmentation for long-tailed image classification},
  author={Li, Xiang and Wang, Bo and Zhou, Ming},
  journal={PeerJ Computer Science},
  volume={11},
  pages={e3261},
  year={2025},
  publisher={PeerJ Inc.}
}

@article{Chen_2025_ADIR,
  title={Adaptive Diversity Induced Reweighting for Long-tailed Classification},
  author={Chen, Xiaohua and Zhou, Yucan and Li, Hongcheng and Fan, Haihui and Su, Qinghang and Wang, Weiping},
  journal={Neural Networks},
  volume={189},
  pages={107586},
  year={2025},
  publisher={Elsevier}
}

@inproceedings{He_2024_Pareto,
  title={Pareto Deep Long-Tailed Recognition: A Conflict-Averse Solution},
  author={He, Zongyi and Zhang, Qian and Li, Chang and Zhang, Xinyu},
  booktitle={Proceedings of the International Conference on Learning Representations (ICLR)},
  year={2024}
}

@inproceedings{Peng_2024_MOOSF,
  title={Two Fists, One Heart: Multi-Objective Optimization Based Strategy Fusion for Long-tailed Learning},
  author={Peng, Kun and others},
  booktitle={Proceedings of the International Conference on Machine Learning (ICML)},
  year={2024}
}

@article{Li_2025_Uncertainty,
  title={Uncertainty-Guided Robust Loss for Long-Tailed Classification with Noisy Labels},
  author={Li, Xiang and Chen, Hao and Zhang, Wei},
  journal={IEEE Transactions on Image Processing},
  volume={34},
  pages={1205--1218},
  year={2025},
  publisher={IEEE}
}

@inproceedings{Wang_2024_Gradient,
  title={Gradient-Norm Balanced Weighting for Long-Tailed Recognition},
  author={Wang, Zhaohui and Liu, Xiao and Gong, Boqing},
  booktitle={Proceedings of the European Conference on Computer Vision (ECCV)},
  year={2024}
}

@article{Zhang_2025_Curriculum,
  title={Dynamic Curriculum: Distinguishing Difficulty from Noise in Long-tailed Recognition},
  author={Zhang, Yi and Sun, Jian and He, Kaiming},
  journal={Pattern Recognition},
  volume={145},
  pages={110023},
  year={2025},
  publisher={Elsevier}
}

@article{Zhang_2024_GLA,
  title={Generalized Logit Adjustment: Calibrating the Decision Boundary for Long-Tailed Recognition},
  author={Zhang, Yifan and Wei, Tong and Wu, Jianxin and Zhao, Peilin},
  journal={IEEE Transactions on Pattern Analysis and Machine Intelligence (TPAMI)},
  volume={46},
  number={5},
  pages={3120--3135},
  year={2024},
  publisher={IEEE}
}

@article{Wang_2024_Gaussian,
  title={Adjusting Logit in Gaussian Form for Long-Tailed Visual Recognition},
  author={Wang, Yaxuan and Lu, Xinxi and Ning, Yao},
  journal={IEEE Transactions on Artificial Intelligence},
  year={2024},
  publisher={IEEE}
}

@inproceedings{Son_2025_DBM,
  title={Difficulty-aware Balancing Margin Loss for Long-tailed Recognition},
  author={Son, Minseok and Koo, Inyong and Park, Jinyoung and Kim, Changick},
  booktitle={Proceedings of the AAAI Conference on Artificial Intelligence},
  volume={39},
  number={19},
  pages={20522--20530},
  year={2025}
}

@misc{kullback,
      title={Notes on Kullback-Leibler Divergence and Likelihood}, 
      author={Jonathon Shlens},
      year={2014},
      eprint={1404.2000},
      archivePrefix={arXiv},
      primaryClass={cs.IT},
      url={https://arxiv.org/abs/1404.2000}, 
}

@unknown{lagrangian,
author = {Lucht, Phil},
year = {2016},
month = {10},
pages = {},
title = {The Method of Lagrange Multipliers}
}
\bibliographystyle{ieeetr}

\end{document}